\let\NAT@parse\undefined
\newtheorem{rem}{Remark}
\newtheorem{theorem}{Theorem}
\algnewcommand\algorithmicforeach{\textbf{for each}}
\title{\LARGE \bf
IDF-MFL: Infrastructure-free and Drift-free Magnetic Field Localization for Mobile Robot
}
\author{Hongming Shen,~\IEEEmembership{Member,~IEEE}, Zhenyu Wu, Wei Wang,~\IEEEmembership{Graduate Student Member,~IEEE}, \\Qiyang Lyu, Huiqin Zhou, and Danwei Wang,~\IEEEmembership{Fellow,~IEEE}
\thanks{This research is supported by the National Research Foundation, Singapore, under the NRF Medium Sized Centre scheme (CARTIN). Any opinions, findings and conclusions or recommendations expressed in this material are those of the author(s) and do not reflect the views of National Research Foundation, Singapore and Maritime and Port Authority of Singapore.}
\thanks{All authors are with the School of Electrical and Electronic Engineering,
Nanyang Technological University, Singapore 639798. \textit{Corresponding author: Hongming Shen (hongming.shen@ntu.edu.sg)}}%
}
\begin{document}

\maketitle
\thispagestyle{empty}
\pagestyle{empty}

\begin{abstract}
In recent years, infrastructure-based localization methods have achieved significant progress thanks to their reliable and drift-free localization capability.
However, the pre-installed infrastructures suffer from inflexibilities and high maintenance costs.
This poses an interesting problem of how to develop a drift-free localization system without using the pre-installed infrastructures.
In this paper, an infrastructure-free and drift-free localization system is proposed using the ambient magnetic field (MF) information, namely IDF-MFL.
IDF-MFL is infrastructure-free thanks to the high distinctiveness of the ambient MF information produced by inherent ferromagnetic objects in the environment, such as steel and reinforced concrete structures of buildings, and underground pipelines.
The MF-based localization problem is defined as a stochastic optimization problem with the consideration of the non-Gaussian heavy-tailed noise introduced by MF measurement outliers (caused by dynamic ferromagnetic objects), and an outlier-robust state estimation algorithm is derived to find the optimal distribution of robot state that makes the expectation of MF matching cost achieves its lower bound.
The proposed method is evaluated in multiple scenarios\footnote{\url{https://youtu.be/pTRD7SZRuA8}}, including experiments on high-fidelity simulation, and real-world environments. The results demonstrate that the proposed method can achieve high-accuracy, reliable, and real-time localization without any pre-installed infrastructures.
\end{abstract}

\section{Introduction}
Localization is a fundamental task in developing autonomous robotic systems which has the potential to enable extensive industrial applications, such as construction\cite{doi:10.1126/scirobotics.abp9758}, logistics\cite{doi:10.1126/scirobotics.adm7020}, and underground mine exploration\cite{doi:10.1126/scirobotics.abp9742}.
All of these missions rely on reliable state estimation for mobile robots.
During the last decade, simultaneous localization and mapping (SLAM) technology\cite{FASTLIO2,ZHU2021107185,vins-mono} has been well applied to mobile robots.
However, localization in enclosed or partially enclosed environments (e.g. corridors, industrial warehouses, carparks) still remains challenging and intractable due to the inevitable drift of the SLAM system.
Even though the loop closure\cite{6202705,8593953} and bundle adjustment (BA) \cite{8793749,deng2023plgslam,10263983} techniques are able to correct drift, the state can change drastically when the drift is eliminated.
The state transition is unacceptable for the mobile robots due to the causing of destabilization.
To overcome the aforementioned limitations of SLAM systems, pre-installed infrastructures, such as ultra-wideband (UWB) anchor\cite{9502143}, radio frequency identifications (RFIDs)\cite{9509296}, and QR codes\cite{8602360}, are typically deployed in industrial scenarios.
However, the installation and maintenance process of infrastructures-based localization systems is inflexible and costly which significantly limits the industrial application of the mobile robot system.
Hence, inspired by natural animals, such as spiny birds and lobsters, that can sense their position and orientation using information from the local anomalies of earth MF \cite{Xu2021,Boles2003}, an infrastructure-free localization system is investigated for mobile robots by using the local ambient MF.
\subsection{Related Works}
In recent years, infrastructure-free localization systems have been amenable for wide-scale commercial use.
A serial of priori-map-based SLAM methods is investigated to realize smooth and drift-free localization without any infrastructures.
In \cite{https://doi.org/10.1002/rob.21936}, a real‐time high‐precision visual localization system is designed for autonomous vehicles that employ only low‐cost stereo cameras to localize the vehicle with a priori 3D LiDAR map.
The drift of the visual odometry is eliminated by registering the visual point cloud to the pre-build LiDAR map through a probabilistic weighted Normal Distributions Transformation (NDT) \cite{doi:10.1177/0278364912460895}.
Considering that visual feature detection and matching are unstable in texture-less or repetitive scenarios, a semantic pre-build map is adopted in \cite{9340939} which contains typical features in parking lots, such as guide signs, parking lines, speed bumps, etc.
However, the priori-map-based SLAM methods typically degrade significantly or even become unobservable in conditions of the visual/LiDAR odometry unreliable caused by poor illumination or self-similar scenes.

To overcome this, an ambient MF-based localization system has become as a viable alternative for infrastructure-free localization thanks to the distinctiveness and pervasiveness of MF distortions caused by ferromagnetic objects.
The use of MF as a source of SLAM is a promising novel approach.
To achieve MF-based SLAM, the MagSLAM \cite{6817910} employs a grid-based spatial discretization methodology and assumes the MF intensity in one grid with respect to the same distribution.
In \cite{8455789}, a pedestrian dead reckoning (PDR)-aided MF SLAM is investigated which represents the MF map with a reduced-rank Gaussian process\cite{Solin2020} using the Laplace basis functions, and a Rao-Blackwellized particle filter (PF) is adopted to compensate for position drift in PDR.
To further improve the real-time performance of \cite{8455789}, the PF is replaced by extended Kalman filter (EKF) in \cite{s22082833} thanks to the derivation of the MF gradient expressions.
However, similar to the visual/LiDAR SLAM \cite{vins-mono,FASTLIO2}, the MF SLAM also suffers from drift when the pre-build MF map is unavailable.

A series of works investigate the process of offline MF map construction through the bilinear interpolation \cite{HAVERINEN20091028}, Gaussian process \cite{8373720}, and learning-based MF prediction \cite{POLLOK2023170556}.
In \cite{MFGN}, a drift-free MF-based localization algorithm is proposed by fusing the MF-based localization with PDR through the EKF, where the MF-based localization algorithm is designed to estimate the rigid transformation that optimally aligns the MF measurements with the pre-build MF map through the Gauss-Newton optimization.
Considering the potential MF gradient flatness region in real-world environments, which can lead to the degeneracy of gradient-based state estimation (e.g. Gauss-Newton, Levenberg-Marquardt, and Kalman filter) \cite{7487211}, PF \cite{6696459,7354010,MFPF} is well-applied thanks to its excellent performance in non-convex optimization.
In \cite{9947016}, a complete MF localization system is designed by integrating offline MF map construction, PDR odometry, and a dynamic time warping (DTW)-based MF sequence matching \cite{10.1145/2508037.2508054} into an indoor pedestrian localization framework.
Both \cite{MFGN,6696459,7354010,MFPF,9947016} are wheel odometry/PDR-aided MF localization systems, which limited them can only deploy on certain robot platforms (legged robots or robots equipped with the wheel encoder).
To overcome this problem, a few works \cite{9095809,8683984,WUICRA23} paid attention to the pure MF localization algorithm.
In \cite{9095809} and \cite{8683984} the MF-based localization is realized through the classical K-Nearest Neighbors (KNN) algorithm.
However, the KNN-based MF matching is typically limited by the similar-sequential-route assumption (the online localization route should be similar to the route used in the offline MF map construction process) due to the 3D MF vector measured by a magnetometer at one single location is highly dependent on the orientation.
In \cite{WUICRA23}, a pure MF localization method is proposed with the maximum a posteriori (MAP) estimator which solves the similar-sequential-route limitation problem with a rotation-invariant MF descriptor.
\subsection{Motivation and Contributions}
In view of the aforementioned analysis, despite the recent popularity of ambient MF-based state estimation research for infrastructure-free localization, most of the works are drift-suffered\cite{6817910,8455789,s22082833}, auxiliary odometry-aided\cite{MFGN,6696459,7354010,MFPF,9947016}, or similar-sequential-route limited\cite{9095809,8683984}.
However, drift is unacceptable for mobile robots in enclosed or partially enclosed environments, and the dependence on auxiliary odometry limits the application of the algorithm into certain robot platforms, while the similar-sequential-route limitation leads the robot to track with the fixed path.
With the goal of designing a flexible drift-free localization system, in this work, a robust pure MF-based localization system is developed without any assistance from infrastructures.
The key idea of the proposed infrastructures-free, drift-free localization system is to find the optimal distribution of state that makes the expectation of MF matching cost achieves its lower bound.
The MF matching cost is designed to describe the difference between the real-time MF measurements and the pre-built MF map which integrates correct MF matching measurement and MF measurement outlier (ignored in traditional MF-based localization methods) simultaneously in a piecewise function.
Different from the traditional MF-based localization methods\cite{6817910,8455789,s22082833,MFGN,6696459,7354010,MFPF,9095809,8683984,WUICRA23}, which generally approximate the MAP problem with the Gaussian distribution assumption, the proposed method deals with non-Gaussian heavy-tailed noise (MF measurement outlier introduced) robustly by optimizing the expectation of cost function with a stochastic optimization algorithm. The main contributions of this paper are listed as follows:
\begin{itemize}
    \item A pure MF-based localization approach is proposed for mobile robots which is infrastructure-free, drift-free, and does not rely on any additional odometry information.
    \item To deal with the non-Gaussian heavy-tailed noise, an outlier-robust state estimator is derived which can optimize the non-convex and noncontinuous state estimation problem with parallel sampling.
    \item The practical implementations of the proposed stochastic estimator are illustrated in detail, including Monte-Carlo approximation, sampling dimension shrinking on manifolds, and cost-shifting strategy. The practicability and performance of the proposed MF-based localization system are extensively verified in both simulated and real-world experiments.
\end{itemize}
\section{System Overview}
The overall design of the proposed MF-based localization system is given in \mbox{Fig. \ref{system overview}}, which can be divided into an offline MF map construction phase and an online MF-based localization phase.
For offline MF map construction, an external odometry is introduced to project the MF measurements into the world frame.
Although the loop closure and BA of the SLAM system can lead to state transition in the online localization phase, these techniques are very suitable to provide a drift-free and high-accuracy trajectory for the offline mapping phase.
In this work, a LiDAR SLAM algorithm \cite{FASTLIO2} is adopted to provide drift-free pose estimation with loop closure \cite{8593953}, and a global LiDAR BA \cite{10263983} is also implemented to further improve the accuracy of the estimated offline trajectory.
After projecting the collected MF data into the world frame, a MF interpolation is adopted using the Gaussian process \cite{8373720} to produce a dense MF map.
The MF map is stored using the hash data structure in a certain resolution (0.05m used in this work) to achieve an effective data insertion and index operation in a constant time complexity of $O(1)$.
For the online localization phase, the magnetic matching cost is constructed by measuring the difference between the online collected MF measurements and their correspondences in the pre-build MF map (indexed in the hash table).
The online MF-based localization is realized by minimizing the expectation of the MF matching cost in a stochastic optimization manner (\mbox{Section \ref{sec: Robust Estimation via Stochastic Optimization}}), 
and a Monte-Carlo sampling is adopted to approximate the optimal distribution of state which makes the expectation of cost function achieve its lower bound (\mbox{Section \ref{sec: Practical Implementation for Magnetic Field-based State Estimation}}).
\begin{rem} \label{rem: pure MF}
    Different from auxiliary odometry-aided MF-based localization methods \cite{MFGN,6696459,7354010,MFPF,9947016}, in this work, the external odometry information is only used in the offline MF map construction 
    phase.
    In industrial applications, map construction devices and mobile robots are typically two separate entities.
    Hence, during the online localization phase, low-cost pure MF localization can be achieved without LiDAR or platform-specific odometry systems, such as PDR, wheel odometry.
\end{rem}

\begin{figure}[!t]\centering
\includegraphics[width=\linewidth]{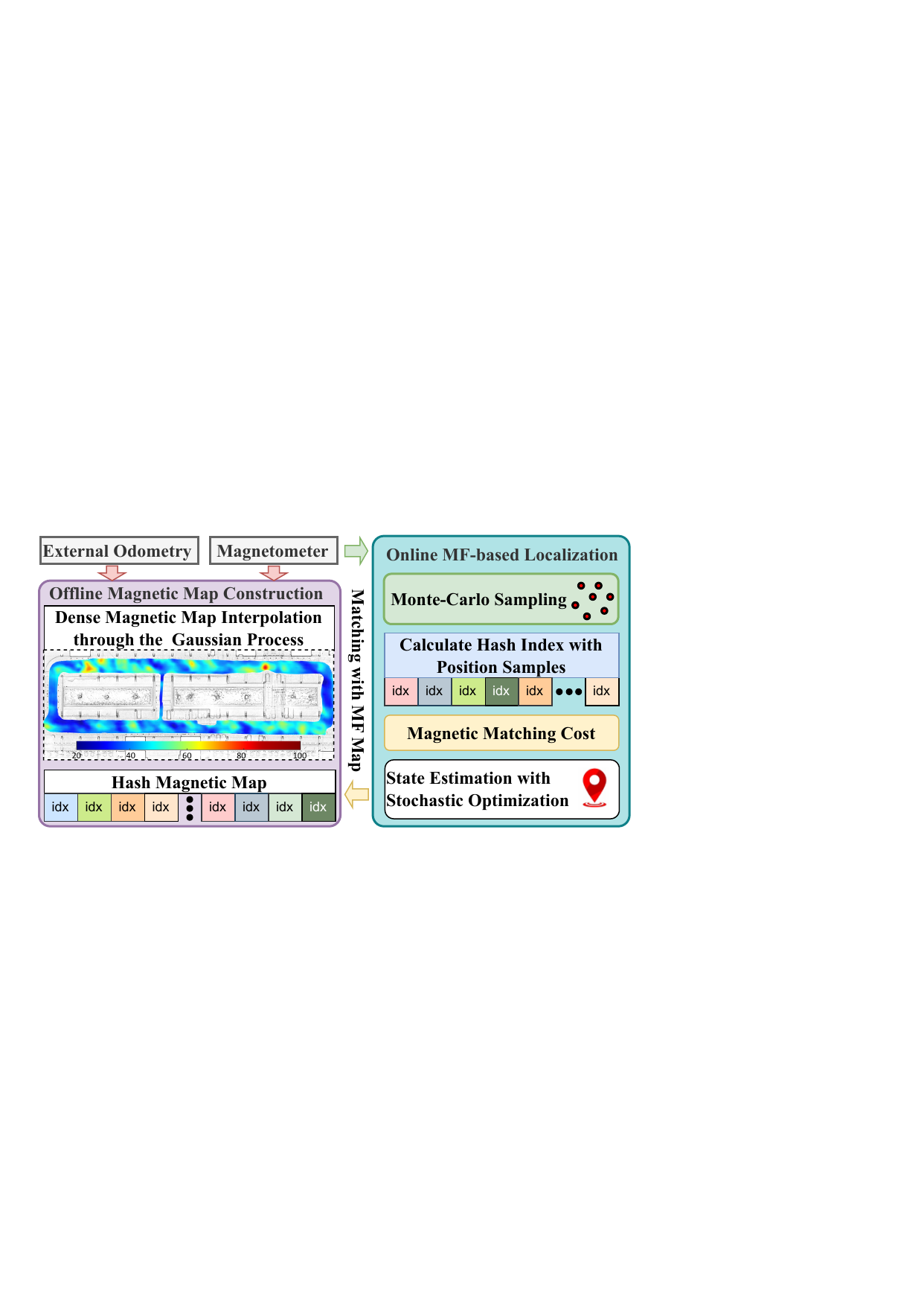}
\vspace{-2em}
\caption{System overview of IDF-MFL. As noted in \textit{Remark \ref{rem: pure MF}}, the external odometry is only used in offline MF map construction phase.}\label{system overview}
\vspace{-1em}
\end{figure}

\section{Problem formulation} \label{sec: Problem formulation}
The objective of the proposed localization system is to utilize magnetic sensing to determine the robot's state, producing robust, drift-free, and infrastructure-free estimates in repetitive environments.
Given a magnetic map $\mathbb M$ and a sequence of magnetic measurements ${\mathbb B}_{t_k} = \{\mathbf{B}^i_{t_k}\}_{i=1}^N$, with ${\mathbf{B}^i_{t_k}} = [B^i_{x,t_k}, B^i_{y,t_k}, B^i_{z,t_k}]^\top$ the $i$-th magnetic sensor measurements within the magnetic sensor frame.
The MF-based localization measurements can be represented as the following model:
\begin{equation}
\begin{aligned}
    &{{\bf{R}}^i_{{t_k}}}{{\bf{B}}^i_{{t_k}}} = \mathbb{M}(\mathbf{p}_{t_k}^i)+ {{\bf{o}}^i_{{t_k}}} + {{\bf{n}}_{{t_k}}}, i = 1,\ldots, N\\
    &{\bf{R}}_{{t_k}}^i = {{\bf{R}}_{{t_k}}}{}^b{\bf{R}}_m^i,{\bf{p}}_{{t_k}}^i = {{\bf{R}}_{{t_k}}}{}^b{\bf{p}}_m^i + {{\bf{p}}_{{t_k}}}
\end{aligned}    
\end{equation}
where $\mathbf{R}_{t_k} \in \text{SO3}$ and $\mathbf{p}_{t_k} \in \mathbb{R}^3$ are the unknown (to-be-estimated) rotation and translation vector, $(\cdot)_{t_k}$ indicates the variable obtained at time $t_k$, $N$ is the number of magnetic sensors installed on the robot, $^{b}{\bf R}^i_{m}$ and $^{b}{\bf p}^i_{m}$ are extrinsic between $i$-th magnetic sensor frame and body frame, ${\mathbb{M}}({{\bf{p}}^i_{{t_k}}})$ is a function which returns the ambient MF of pre-build MF map $\mathbb M$ at position ${{\bf{p}}^i_{{t_k}}}$ within the local frame, ${\bf o}^i_{t_k}$ is a zero vector if the measurement is an inlier, or a vector of arbitrary number for outlier measurement, and ${\bf n}_{t_k}$ models the measurement noise.

Traditional MF-based state estimation methods \cite{WUICRA23,MFGN,MFPF} formulate the localization problem as a MAP problem which is solved through the iterative least-squares optimization or PF with the Gaussian approximation.
However, the estimation performance of both least-squares optimization and PF degrades dramatically under non-Gaussian heavy-tailed noises which are often induced by measurement outliers $\mathbf{o}_{t_k}^i$.
Hence, the present work defined the MF-based state estimation problem as a stochastic optimization problem\cite{schneider2007stochastic}:
\begin{equation}
{{{\bf{\hat x}}}_{{t_k}}} = \mathop {\arg \min }\limits_{{{\mathbf{x}}_{{t_k}}} \sim \mathcal{Q}} \mathbb{E}_{\mathcal{Q}}\left[ {S\left( {{{\mathbf{x}}_{{t_k}}}} \right)} \right]
\label{Eq: stochastic opt}
\end{equation}
where $\mathcal{Q}$ is treated as the basis distribution of the state vector $\mathbf{x}_{t_k}$, $\mathbb{E}_{\mathcal{Q}}[\cdot]$ denotes the expectation operation over the state vector $\mathbf{x}_{t_k}$ with respect to $\mathcal{Q}$, $S(\mathbf{x}_{t_k})$ is the cost function defined in (\ref{Eq: TLS}), and the optimal state vector $\mathbf{x}^*_{t_k}$ is defined as:
\begin{equation}
    \mathbf{x}^*_{t_k} = \left[\mathbf{p}_{t_k}^{*\top}, {\bm {\phi}}_{t_k}^{*\top}, \mathbf{v}_{t_k}^{*\top}, \mathbf{\bm {\omega}}_{t_k}^{*\top}\right]^\top
\end{equation}
where ${\bm\phi}_{t_k}\in {\mathbb R}^3$ is the axis-angle representation of the rotation matrix $\mathbf{R}_{t_k} \in \text{SO3}$,
$\mathbf{v}_{t_k}$ and ${\bm \omega}_{t_k}$ are linear velocity and angular velocity, respectively.

Assume the noise of the MF-based measurements is unknown but bounded \cite{milanese1989estimation}, the cost function $S(\mathbf{x}_{t_k})$ can be written as a piecewise function according to the definition of $\mathbf{o}_{t_k}^i$. 
\begin{equation}
S\left( {{{\bf{x}}_{{t_k}}}} \right) = \sum\limits_{i = 1}^N {\left\{ {\begin{array}{*{20}{l}}
{{{\left\| {{\mathbb{M}}({\bf{p}}_{{t_k}}^i) - {\bf{R}}_{{t_k}}^i{\bf{B}}_{{t_k}}^i} \right\|}^2}}&{,inlier}\\
c^2&{,outlier}
\end{array}} \right.} \label{Eq: TLS}
\end{equation}
where $c^2$ is a value larger than the upper bound of ${{\left\| {{\mathbb{M}}({\bf{p}}_{{t_k}}^i) - {\bf{R}}_{{t_k}}^i{\bf{B}}_{{t_k}}^i} \right\|}^2}$.
\section{Robust Estimation via Stochastic Optimization} \label{sec: Robust Estimation via Stochastic Optimization}
Solving the stochastic optimization problem (\ref{Eq: stochastic opt}) is to find the optimal state vector $\mathbf{x}^*_{t_k}$ that minimizes the expectation of the cost function $S(\mathbf{x}_{t_k})$.
Assume that the optimal distribution $\mathcal{Q}^*$ can be defined, which makes $\mathbb{E}_{\mathcal{Q}^*}$ provides the lower bound of $\mathbb{E}_{\mathcal{Q}}$. The stochastic optimization problem (\ref{Eq: stochastic opt}) can be solved by calculating the expectation of $\mathbf{x}_{t_k}$ over the optimal distribution $\mathcal{Q}^*$.
\begin{equation}
\begin{aligned}
{{{\bf{x}}}^*_{{t_k}}} &= \mathop {\arg \min }\limits_{{{\bf{x}}_{{t_k}}} \sim {\cal Q}} {\mathbb{E}_{\cal Q}}\left[ {S\left( {{{\bf{x}}_{{t_k}}}} \right)} \right]\\
 &\buildrel \Delta \over = {{\mathbb{E}}_{{\mathcal{Q}^*}}}\left[ {{{\bf{x}}_{{t_k}}}} \right] = \int{q^* \left( {\bf{x}} \right){\bf{x}}d{\bf{x}}}
\end{aligned} \label{Eq: simple stochastic opt}
\end{equation}
where $q^*(\cdot)$ is the probability density function of the optimal distribution $\mathcal{Q}^*$. As a consequence of (\ref{Eq: simple stochastic opt}), if the optimal probability density function $q^*$ can be defined, the stochastic optimization problem can solved by directly sampling from the optimal distribution.
Hence, the definition of the optimal probability density function is crucial for solving (\ref{Eq: simple stochastic opt}), which is derived from the lower bound of $\mathbb{E}_{\mathcal{Q}^*}\left[S\left(\mathbf{x}_{t_k}\right)\right]$ (given in \textit{Theorem \ref{Them: 1}}).
\begin{theorem} \label{Them: 1}
The lower bound of $\mathbb{E}_{\mathcal{Q}^*}\left[S\left(\mathbf{x}_{t_k}\right)\right]$ can be defined as:
\begin{equation}
\begin{aligned}
&{\mathbb{E}_{{{\cal Q}^*}}}\left[ {S\left( {{{\bf{x}}_{{t_k}}}} \right)} \right] \ge \\  &- \lambda \log \left({\mathbb{E}_{\cal Q}}\left[ {\exp \left( { - \frac{1}{\lambda }S\left( {{{\bf{x}}_{{t_k}}}} \right)} \right)} \right]\right) - \lambda {\mathbb{D}}\left( {{{\cal Q}^*}||{\cal Q}} \right) \label{Eq: theorem 1}
\end{aligned}
\end{equation}
where $\lambda \in \mathbb{R}^+$ and ${\mathbb{D}}\left( {{{\cal Q}^*}||{\cal Q}} \right)$ denotes the KL divergence between distributions ${\cal Q}^*$ and ${\cal Q}$.
\end{theorem}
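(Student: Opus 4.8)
The plan is to recognize inequality (\ref{Eq: theorem 1}) as an instance of the Gibbs variational principle (equivalently the Donsker--Varadhan representation of relative entropy) and to prove it directly from the non-negativity of the KL divergence, rather than invoking the result as a black box. First I would introduce the tilted (Gibbs) density
\begin{equation}
q_g(\mathbf{x}) = \frac{q(\mathbf{x})\exp\!\left(-\tfrac{1}{\lambda}S(\mathbf{x})\right)}{Z}, \qquad Z = \mathbb{E}_{\mathcal{Q}}\!\left[\exp\!\left(-\tfrac{1}{\lambda}S(\mathbf{x}_{t_k})\right)\right],
\end{equation}
where $q$ is the density of $\mathcal{Q}$ and $Z$ is the normalizing constant. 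Because the cost $S$ in (\ref{Eq: TLS}) is a finite sum of terms each capped at $c^2$, it is bounded, so $Z$ is finite and strictly positive and $q_g$ is a well-defined probability density; this also guarantees that every expectation below converges and sidesteps any integrability concerns.

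The key step is to compute the KL divergence of $\mathcal{Q}^*$ from this Gibbs distribution and use $\mathbb{D}(\mathcal{Q}^*\|\mathcal{Q}_g)\ge 0$. Substituting the definition of $q_g$ and splitting the logarithm gives
\begin{equation}
\begin{aligned}
0 \le \mathbb{D}(\mathcal{Q}^*\|\mathcal{Q}_g) &= \int q^*(\mathbf{x})\log\frac{q^*(\mathbf{x})}{q_g(\mathbf{x})}\,d\mathbf{x} \\
&= \mathbb{D}(\mathcal{Q}^*\|\mathcal{Q}) + \log Z + \frac{1}{\lambda}\mathbb{E}_{\mathcal{Q}^*}\!\left[S(\mathbf{x}_{t_k})\right],
\end{aligned}
\end{equation}
where the three terms arise respectively from the ratio $\log(q^*/q)$, the normalizer $\log Z$, and the tilting factor $\tfrac{1}{\lambda}S$. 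Rearranging this inequality and multiplying through by $\lambda>0$ (which preserves the direction of the inequality) yields exactly (\ref{Eq: theorem 1}).

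Finally, I would note that equality is attained precisely when $\mathcal{Q}^*=\mathcal{Q}_g$, i.e. when the optimal density is the Gibbs/Boltzmann tilt $q^*\propto q\exp(-S/\lambda)$; this identifies the optimal distribution sought in (\ref{Eq: simple stochastic opt}) and motivates the subsequent sampling scheme. The only genuinely delicate point in the argument is justifying $\mathbb{D}(\mathcal{Q}^*\|\mathcal{Q}_g)\ge 0$ with equality iff the measures coincide --- this is Gibbs' inequality, which follows from Jensen's inequality applied to the strictly convex map $t\mapsto t\log t$, together with the absolute continuity $\mathcal{Q}^*\ll\mathcal{Q}$ needed for the relative entropies to be finite. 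Everything else is bookkeeping.
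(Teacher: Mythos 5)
Your proof is correct, but it takes a recognizably different route from the paper's. The paper proves the bound by first rewriting the log-partition term $-\lambda\log\left(\mathbb{E}_{\mathcal{Q}}\left[\exp\left(-\tfrac{1}{\lambda}S\right)\right]\right)$ as an expectation under $\mathcal{Q}^*$ via importance sampling (multiplying and dividing by $q^*/q$ inside the expectation), then applying Jensen's inequality to the concave logarithm, and finally recognizing the KL divergence in the resulting expression. You instead construct the tilted Gibbs density $q_g \propto q\exp(-S/\lambda)$ explicitly and derive the bound from the single fact $\mathbb{D}(\mathcal{Q}^*\|\mathcal{Q}_g)\ge 0$. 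The two arguments are equivalent at the core --- non-negativity of relative entropy is itself Jensen's inequality in disguise, and your $\log Z$ is exactly the paper's $\log\left(\mathbb{E}_{\mathcal{Q}}\left[\exp\left(-\tfrac{1}{\lambda}S\right)\right]\right)$ --- but your packaging buys two things the paper's does not. First, the equality condition $\mathcal{Q}^*=\mathcal{Q}_g$ falls out immediately, which is precisely the characterization of the optimal density $q^*$ that the paper has to derive in a separate post-theorem calculation (its equations expanding the KL divergence and identifying $q^*$ as the normalized exponential tilt). Second, you explicitly address well-posedness: the boundedness $0\le S\le Nc^2$ from the piecewise cost (\ref{Eq: TLS}) guarantees $Z$ is finite and strictly positive, and you flag the absolute continuity $\mathcal{Q}^*\ll\mathcal{Q}$ that both proofs silently require (the paper's $q/q^*$ ratio in the importance-sampling step needs the same hypothesis). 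The paper's presentation has the modest pedagogical advantage of making the importance-sampling mechanism visible, which it reuses later in deriving the weighted Monte-Carlo estimator, but as a proof of the inequality itself your version is tighter and more self-contained.
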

\begin{proof}
For (\ref{Eq: theorem 1}), the expectation of the first term on the right-hand side can be switched by using the standard importance sampling technique\cite{doucet2001sequential}.
\begin{equation}
\begin{aligned}
- \lambda \log &\left({\mathbb E}{_\mathcal Q}\left[\exp \left( - \frac{1}{\lambda }{{S}}\left({\mathbf{x}_{t_k}}\right)\right)\right]\right) \\&=
- \lambda \log \left({\mathbb E}{_{\mathcal{Q}^*}}\left[\exp \left( - \frac{1}{\lambda }{{S}}\left({\mathbf{x}_{t_k}}\right)\right)\frac{q}{q^*}\right]\right) 
\end{aligned} \label{Eq: proof importance sampling}
\end{equation}
where $q$ is the density function correspondence to distribution $\cal Q$.
Using Jensen's inequality and the concavity of the logarithm, the upper bound of the right-hand side of (\ref{Eq: proof importance sampling}) can be defined as:
\begin{equation}
\begin{aligned}
- \lambda \log &\left({\mathbb E}{_{\mathcal{Q}^*}}\left[\exp \left( - \frac{1}{\lambda }{{S}}\left({\mathbf{x}_{t_k}}\right)\right)\frac{q}{q^*}\right]\right) \le \\&- \lambda \mathbb{E}_{\mathcal{Q}^*}\left[\log\left(\exp\left(-\frac{1}{\lambda}{S}(\mathbf{x}_{t_k})\right)\frac{q}{q^*}\right)\right]
\end{aligned} \label{Eq: proof jensen}
\end{equation}
The right-hand side of (\ref{Eq: proof jensen}) can be simplified using the definition of KL-divergence. 
\begin{equation}
\begin{aligned}
- \lambda \mathbb{E}_{\mathcal{Q}^*}\left[\log\left(\exp\left(-\frac{1}{\lambda}{S}(\mathbf{x}_{t_k})\right)\frac{q}{q^*}\right)\right]& \\= \mathbb{E}_{\mathcal{Q}^*}\left[{S}(\mathbf{x}_{t_k})\right] &+ \lambda{\mathbb D}({\mathcal Q}^*||\mathcal Q)
\end{aligned} \label{Eq: KL}
\end{equation}
Substituting (\ref{Eq: proof jensen}) and (\ref{Eq: KL}) into (\ref{Eq: proof importance sampling}) yields the inequation (\ref{Eq: theorem 1}). This completes the proof.
\end{proof}

According to \textit{Theorem \ref{Them: 1}}, the optimal probability density function can be derived.
Expanding the KL-divergence ${\mathbb{D}}\left( {{{\cal Q}^*}||{\cal Q}} \right)$, (\ref{Eq: theorem 1}) can be rewritten as:
\begin{equation}
\begin{aligned}
{{\mathbb{E}}_{{{\cal Q}^*}}}&\left[ {\log \left( {\frac{{{q^*}}}{q}} \right)} \right] = {{\mathbb{E}}_{{{\cal Q}^*}}}\left[ {\log \left( {{{{q^*}}}} \right)} \right] -{{\mathbb{E}}_{{{\cal Q}^*}}}\left[ {\log \left( {{{{q}}}} \right)} \right] \ge\\&- \log\left({{\mathbb{E}}_{\cal Q}}\left[ {\exp \left( { - \frac{1}{\lambda }S\left( {{{\bf{x}}_{{t_k}}}} \right)} \right)} \right]\right) - \frac{1}{\lambda }{{\mathbb{E}}_{{{\cal Q}^*}}}\left[ {S\left( {{{\bf{x}}_{{t_k}}}} \right)} \right]
\end{aligned} \label{Eq: lemma 1}
\end{equation}
Simplifying (\ref{Eq: lemma 1}) as
\begin{equation}
    \mathbb{E}_{{\cal Q}^*}\left[\log(q^*)\right] \ge \mathbb{E}_{{\cal Q}^*}\left[\log\left(\frac{{\exp \left[ { - \frac{1}{\lambda }S\left( {{{\bf{x}}_{{t_k}}}} \right)} \right]q}}{{{{\mathbb{E}}_{\cal Q}}\left[ {\exp \left( { - \frac{1}{\lambda }S\left( {{{\bf{x}}_{{t_k}}}} \right)} \right)} \right]}}\right)\right]
\end{equation}
Hence, the $\mathbb{E}_{{\cal Q}^*}$ achieves its lower bound (\ref{Eq: theorem 1}) when the optimal probability density function $q^*$ is defined as follow
\begin{equation}
{q^*} = \frac{{\exp \left[ { - \frac{1}{\lambda }S\left( {{{\bf{x}}_{{t_k}}}} \right)} \right]q}}{{{{\mathbb{E}}_{\cal Q}}\left[ {\exp \left( { - \frac{1}{\lambda }S\left( {{{\bf{x}}_{{t_k}}}} \right)} \right)} \right]}}
\end{equation}

With the definition of the optimal probability density function, (\ref{Eq: simple stochastic opt}) can be rewritten by using the standard importance sampling \cite{doucet2001sequential} trick.
\begin{equation}
{{{\bf{x}}}^*_{{t_k}}} = \int {\underbrace {\frac{{{q^*}({\bf{x}})}}{{q({\bf{x}})}}}_{w({\bf{x}})}q({\bf{x}}){\bf{x}}d{\bf{x}}}  = {{\mathbb{E}}_{\cal Q}}\left[ {w({{\bf{x}}_{{t_k}}}){{\bf{x}}_{{t_k}}}} \right] \label{Eq: q*}
\end{equation}
where $w(\cdot)$ denotes the weight of importance sampling. The expression in (\ref{Eq: q*}) computes expectation by switching sampling from $\mathcal{Q}^*$ (in (\ref{Eq: simple stochastic opt})) to sampling from $\mathcal Q$.

\section{Practical Implementation for Magnetic Field-based State Estimation} \label{sec: Practical Implementation for Magnetic Field-based State Estimation}
Equation (\ref{Eq: q*}) forms the basis of the proposed MF-based state estimation, which gives the optimal solution to the stochastic optimization problem (\ref{Eq: stochastic opt}).
Considering the expectation is hard to perfectly evaluate in practice, a Monte-Carlo sampling is adopted to approximate the optimal state derived in (\ref{Eq: q*}).
\begin{equation}
{{{\bf{\hat x}}}_{{t_k}}} = \sum\limits_{j = 0}^{M - 1} {\frac{{\exp \left[ { - \frac{1}{\lambda }S\left( {{\bf{x}}_{{t_k}}^j} \right)} \right]{\bf{x}}_{{t_k}}^j}}{{\sum\limits_{k = 0}^{M - 1} {\exp \left[ { - \frac{1}{\lambda }S\left( {{\bf{x}}_{{t_k}}^k} \right)} \right]} }}} \label{Eq: Monte-Carlo sampling}
\end{equation}
where ${{{\bf{\hat x}}}_{{t_k}}}$ denotes the estimated state, $M \in \mathbb{Z}^+$ is the number of Monte-Carlo sampling, $\mathbf{x}_{t_k}^j$ and $\mathbf{x}_{t_k}^k$ denote the random state vector obtained from $j$-th and $k$-th Monte-Carlo sampling, respectively.

\subsection{Reducing the Sampling Dimension on Manifolds}
The real-time performance of the Monte-Carlo sampling required by (\ref{Eq: Monte-Carlo sampling}) is sensitive to the sampling number $M$, the number of magnetic sensors installed on the robot $N$, and the dimension of the state vector $\mathbf{x}_{t_k} \in \mathbb{R}^{12}$.
The time complexity for evaluation of (\ref{Eq: Monte-Carlo sampling}) is $O(12^2\times MN)$, which means both Monte-Carlo sampling number $M$ and magnetic sensors number $N$ are linear dependence parameters of the time complexity, and the state dimension is even the square dependence parameters of the time complexity.
However, $M$ and $N$ are also trade-off parameters between the real-time performance and accuracy of the sampling-based state estimation methods.
Therefore, a dimension reduction sampling strategy is designed to improve the real-time performance of the proposed algorithm.
With the kinematic model, the motion of the robot at time $t_k$ can be predicted from the preceding state $\mathbf{x}_{t_{k-1}}$.
\begin{equation}
\begin{aligned}
&{{\bf{p}}_{{t_k}}} = {{\bf{p}}_{{t_{k - 1}}}} + {{\bf{v}}_{{t_{k - 1}}}}\Delta t + \frac{1}{2}{{\bf{a}}_{{t_{k - 1}}}}\Delta t^2\\
&{\bm \phi _{{t_k}}} = \text{Log}\left[ {\exp \left( {{{\left\lfloor {{{\bm{\omega }}_{{t_{k - 1}}}}\Delta t} \right\rfloor }_ \times }} \right){\mathbf{R}_{{t_{k - 1}}}}} \right]
\end{aligned} \label{Eq: kinematic model}
\end{equation}
where $\Delta t = t_k - t_{k-1}$, ${\left\lfloor \cdot \right\rfloor }_ \times$ denotes the skew-symmetric cross-product matrix, and $\text{Log}(\cdot)$ denotes the logarithmic mapping from a rotation matrix to a rotation vector. 

According to (\ref{Eq: kinematic model}), at time $t_k$, the cost function $S(\mathbf{x}_{t_k})$ can be transformed as a function of the state vector $\mathbf{\hat x}_{t_{k-1}}$ and ${\bm \tau}_{t_{k-1}} = [{\bf a}^\top_{t_{k-1}}, {\bm \omega}^\top_{t_{k-1}}]^\top \in {\mathbb R}^6$. Hence, (\ref{Eq: Monte-Carlo sampling}) can be rewritten as:
\begin{equation}
{{\bm{\hat \tau }}_{{t_{k - 1}}}} = \sum\limits_{j = 0}^{M - 1} {\frac{{\exp \left[ { - \frac{1}{\lambda }S\left( {{{{\bf{\hat x}}}_{{t_{k - 1}}}},{\bm{\tau }}_{{t_{k - 1}}}^j} \right)} \right]{\bm{\tau }}_{{t_{k - 1}}}^j}}{{\sum\limits_{k = 0}^{M - 1} {\exp \left[ { - \frac{1}{\lambda }S\left( {{{{\bf{\hat x}}}_{{t_{k - 1}}}},{\bm{\tau }}_{{t_{k - 1}}}^k} \right)} \right]} }}} \label{Eq: Reduced Monte-Carlo sampling}
\end{equation}
where ${\bm \tau}_{t_{k-1}}^j$ and ${\bm \tau}_{t_{k-1}}^k$ denote the random vector obtained from $j$-th and $k$-th Monte-Carlo sampling, respectively.
The state vector ${\mathbf{\hat x}}_{t_{k}}$ can be estimated by substituting ${\bm{\hat \tau}}_{t_{k-1}}$ and ${\mathbf{\hat x}}_{t_{k-1}}$ into the kinematic model (\ref{Eq: kinematic model}).
As a consequence, the evaluation of (\ref{Eq: Reduced Monte-Carlo sampling}) has a time complexity of $O(6^2\times MN)$ which is only a quarter of the complexity of (\ref{Eq: Monte-Carlo sampling}).

\begin{algorithm}[t]
    \caption{MF-based localization}
    \label{Alg: Gradien-free lio}
    \renewcommand{\thealgocf}{}
    \renewcommand{\theAlgoLine}{}  
    \SetKwInOut{Input}{Input}
    \SetKwInOut{Output}{Output}
    \SetKwInOut{Begin}{Begin}
    \SetKwProg{Fn}{Function}{}\\		
        \Input{$\mathbb{B}_{t_k}$: Newly obtained magnetic measurements, ${\mathbb M}$: MF map, $M$: Number of samples, $\mathbf{\hat x}_{t_{k-1}}$: The state estimated at time $t_{k-1}$.}
        \Output{$\hat{\mathbf{x}}_{t_k}$: The estimated state vector.}
    
    \For{$j\gets 0 \,\, \textbf{to} \,\, M-1$}
    {
        $\bm{\tau}^j_{t_{k-1}} \gets Sampling$\\
        Get ${\bf{x}}^j_{t_k}$ by substitute ${{\bm{\tau }}^j_{{t_{k - 1}}}}$ and ${\bf{\hat x}}_{t_{k-1}}$ into the kinematic model (\ref{Eq: kinematic model})\\
        \For {each $\mathbf{B}^i_{t_k} \in \mathbb{B}_{t_k}$} 
        {
            Calculate the pose of $i$-th magnetic sensor over $j$-th sampling as: ${\bf{R}}_{{t_k}}^{ij} = {\bf{R}}_{{t_k}}^j {^b{\bf{R}}_m^i}, \quad{\bf{p}}_{{t_k}}^{ij} = {\bf{R}}_{{t_k}}^j {^b{\bf{p}}_m^i} + {\bf{p}}_{{t_k}}^j$\\
            \If{$IsOutlier(\mathbf{R}^{ij}_{t_k},\mathbf{p}^{ij}_{t_k})$ is \text{True}}
            {
                $S(\mathbf{x}^{j}_{t_k}) = S(\mathbf{x}^{j}_{t_k}) + c^2$
            }
            \Else
            {
                $S(\mathbf{x}^{j}_{t_k}) = S(\mathbf{x}^{j}_{t_k}) + {{{\left\| {{\mathbb{M}}({\bf{p}}_{{t_k}}^{ij}) - {\bf{R}}_{{t_k}}^{ij}{\bf{B}}_{{t_k}}^i} \right\|}^2}}$
            }
        }
    }
    $S_{\min} = \min\left[S(\mathbf{x}_{t_k}^j)\right], j=0,\cdots,M-1$\\
    ${{\bm{\hat \tau }}_{{t_{k - 1}}}} = \sum\limits_{j = 0}^{M - 1} {\frac{{\exp \left[ { - \frac{1}{\lambda }\left(S\left( {{{{\bf{\hat x}}}_{{t_{k - 1}}}},{\bm{\tau }}_{{t_{k - 1}}}^j} \right)-S_{\min}\right)} \right]{\bm{\tau }}_{{t_{k - 1}}}^j}}{{\sum\limits_{k = 0}^{M - 1} {\exp \left[ { - \frac{1}{\lambda }\left(S\left( {{{{\bf{\hat x}}}_{{t_{k - 1}}}},{\bm{\tau }}_{{t_{k - 1}}}^k} \right)-S_{\min}\right)} \right]} }}}$ \\
    Get ${\bf{\hat x}}_{t_k}$ by substitute ${{\bm{\hat \tau }}_{{t_{k - 1}}}}$ and ${\bf{\hat x}}_{t_{k-1}}$ into the kinematic model (\ref{Eq: kinematic model}).\\
    \Return {${\bf{\hat x}}_{t_k}$}
\end{algorithm}
\subsection{Improve the Numerical Stability: Cost Shifting}
Consider the negative exponentiation required by (\ref{Eq: Reduced Monte-Carlo sampling}) is numerically sensitive to the range of the input values \cite{8558663}.
The range of MF matching cost should be shifted to make the cost of the best Monte-Carlo sample have a value of zero. Define $S_{\min}$ as the minimum cost of $S\left( {{{{\bf{\hat x}}}_{{t_{k - 1}}}},{\bm{\tau }}_{{t_{k - 1}}}^j} \right), j=0,\cdots,M-1$, and multiply $\exp(\frac{1}{\lambda}S_{\min})/\exp(\frac{1}{\lambda}S_{\min})$ by (\ref{Eq: Reduced Monte-Carlo sampling}) results in
\begin{equation}
{{\bm{\hat \tau }}_{{t_{k - 1}}}} = \sum\limits_{j = 0}^{M - 1} {\frac{{\exp \left[ { - \frac{1}{\lambda }\left(S\left( {{{{\bf{\hat x}}}_{{t_{k - 1}}}},{\bm{\tau }}_{{t_{k - 1}}}^j} \right)-S_{\min}\right)} \right]{\bm{\tau }}_{{t_{k - 1}}}^j}}{{\sum\limits_{k = 0}^{M - 1} {\exp \left[ { - \frac{1}{\lambda }\left(S\left( {{{{\bf{\hat x}}}_{{t_{k - 1}}}},{\bm{\tau }}_{{t_{k - 1}}}^k} \right)-S_{\min}\right)} \right]} }}} 
\end{equation}
which can prevent numerical overflow or underflow and improve the stability of the proposed MF-based localization. The complete algorithm is summarized in Algorithm \ref{Alg: Gradien-free lio}.
\section{Results}
In this section, the effectiveness of the proposed MF-based localization method is demonstrated through a series of experiments. To quantitatively verify the capability of the proposed method, a high-fidelity physical simulation environment is deployed with absolute ground truth. Moreover, real-world experiments are conducted with an autonomous mobile robot to further demonstrate the practical applicability and robustness of the proposed method.
\begin{itemize}
    \item High-Fidelity Physical Simulation: The Gazebo software package\footnote{\url{https://gazebosim.org/home}} is employed to resemble the actual industrial warehouse. As shown in \mbox{Fig. \ref{Simulation Scenario}}, the simulated industrial warehouse consists of repetitive goods shelves, steel concrete pillars, and conveyor belts. The local MF variations are modeled according to Gaussian process regression \cite{8373720} and overlaid on the basic geomagnetic field as shown in \mbox{Fig. \ref{MF distribution}}.
    \item Real-world Experiments: A Clearpath Husky A200 autonomous mobile robot platform\footnote{\url{https://clearpathrobotics.com}} is deployed to collect the MF data in various real-world environments, including office corridor, semi-indoor carpark, and office building. As shown in \mbox{Fig. \ref{Husky}}, the Husky A200 platform is equipped with two magnetic sensor arrays (each array consists of 7 magnetic sensors, and only the top array is used for online localization), a VN-100T IMU, an Ouster OS1-32 LiDAR, and a laptop with an Intel i7-10875H CPU @2.3GHz and 32GB RAM. 
    Ground truth for all the real-world data sequences is generated by combine the state-of-the-art LiDAR-inertial odometry FAST-LIO2\cite{FASTLIO2} with loop closure\cite{8593953} and global BA\cite{10263983}.
\end{itemize} 
\begin{figure}
\subfigure[Simulated Warehouse]{
    \includegraphics[width=3.55cm]{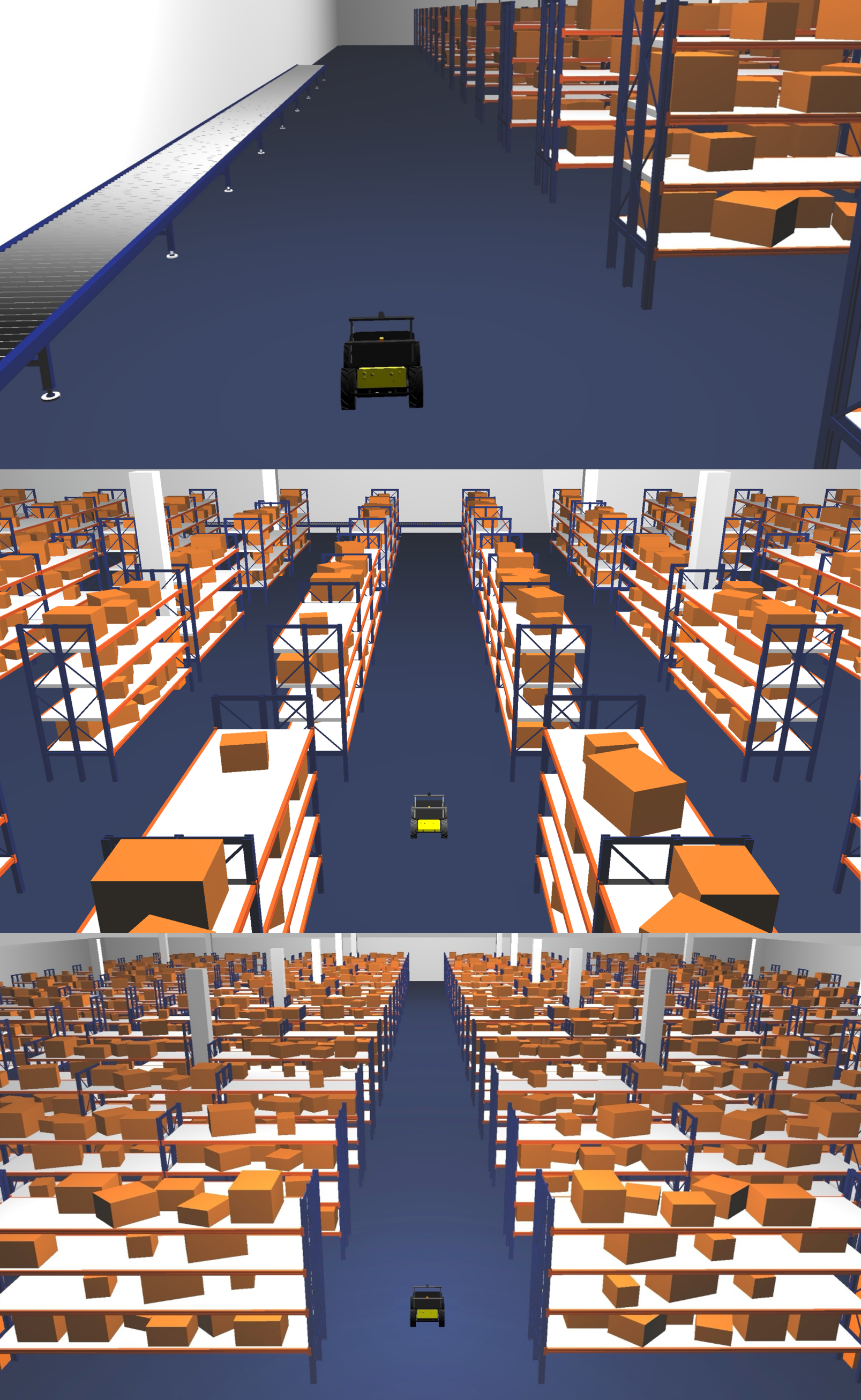}
    \label{Simulation Scenario}		
}
\subfigure[Husky A200 mobile platform]{
    \includegraphics[width=4.65cm]{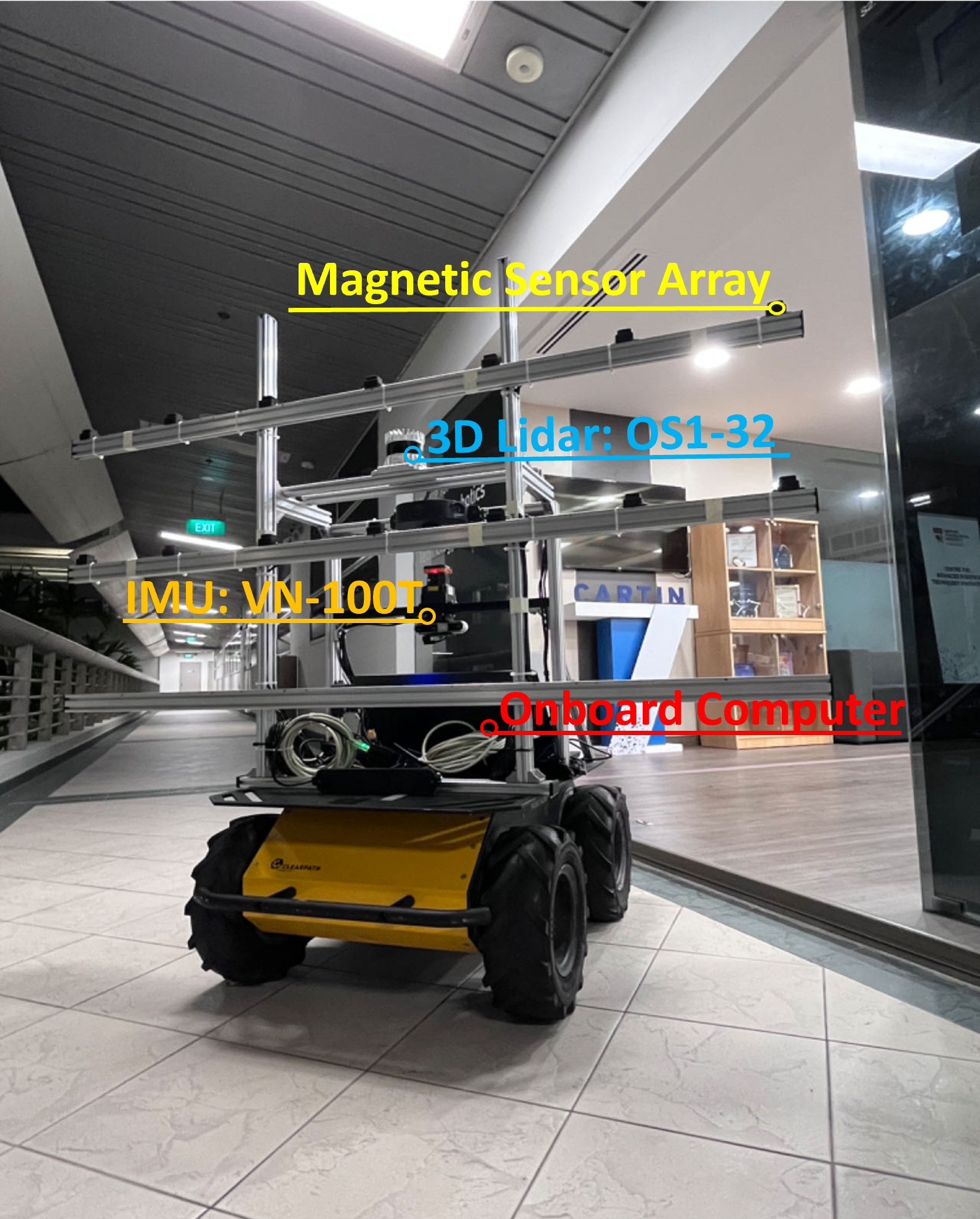} 
    \label{Husky}
}
\vspace{-1em}
\caption{High-fidelity physical simulation environment and the mobile robot platform used in real-world experiments.} \label{Kitti comparision}
\vspace{-1em}
\end{figure}
\begin{figure*}[!t]\centering
\includegraphics[width=\linewidth]{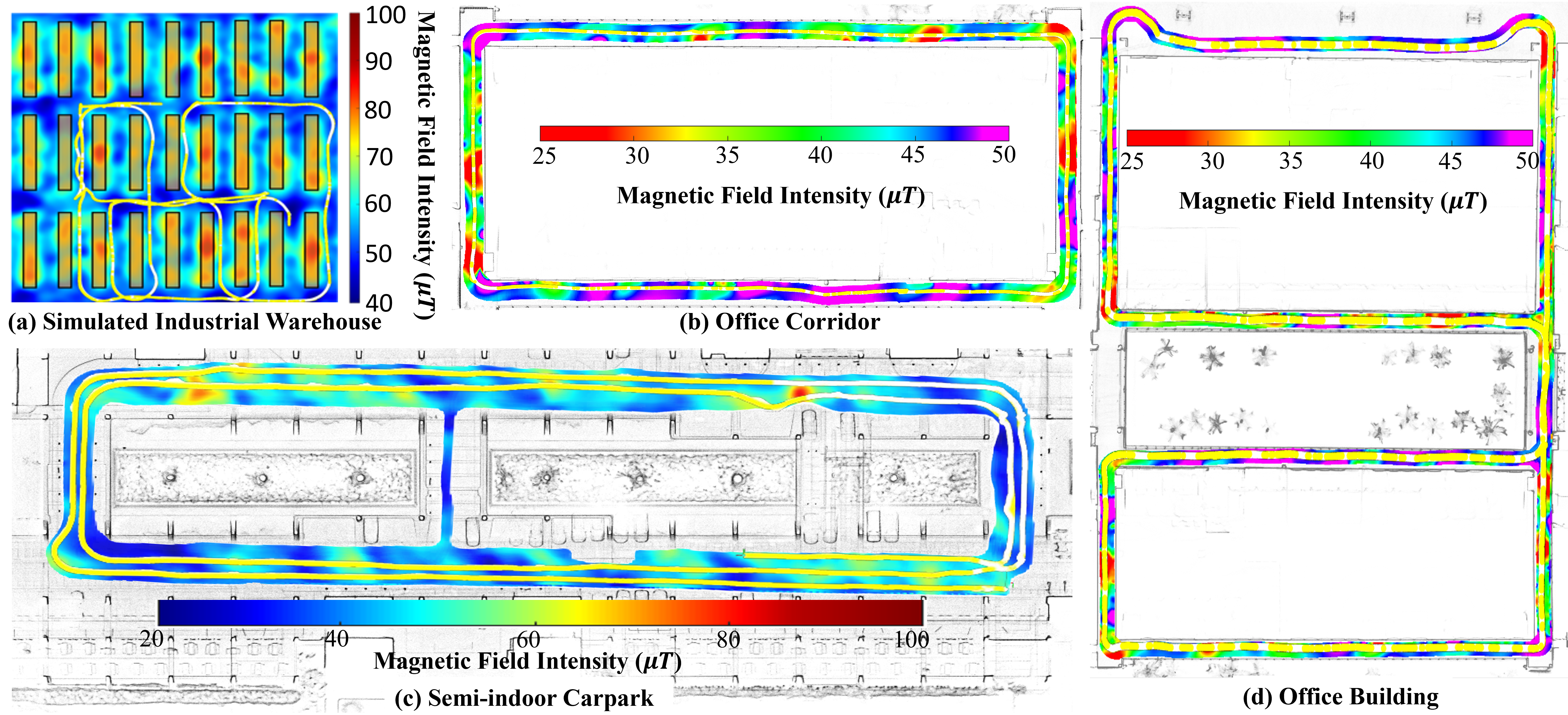}
\vspace{-1em}
\caption{MF distribution and trajectory estimation results. In each map, color designates the MF integrated intensity ($l^2-$norm) according to the scale shown in each map. Trajectories estimated by the proposed method and ground truth are noted with white and yellow, respectively.}\label{MF distribution}
\vspace{-1em}
\end{figure*}
\subsection{Comparison Baseline} \label{Sec: Comparison baseline}
To illustrate the effectiveness of the proposed method, we present detailed quantitative analyses of the proposed with state-of-the-art MF-based localization methods, which include 
1) \textit{Probability-based}\cite{WUICRA23}: a pure MF localization method realized through the MAP estimation; 2) \textit{Gradient-based}\cite{MFGN}: a PDR-aided MF localization method utilizes Guass-Newton iterative and EKF; 3) \textit{PF-based}\cite{MFPF}: a PDR/MF fusion localization method based on PF; 4) \textit{DTW-based}\cite{9947016}: a PDR/MF fusion localization method based on DTW.
\begin{rem} \label{rem: wheel odom aid}
Most MF-based localization methods rely on foot-mounted sensors and PDR algorithms, such as \cite{MFGN,MFPF,9947016}, which cannot provide state estimation for non-legged robots. 
For a fair comparison, the PDR algorithm output used in \cite{MFGN,MFPF,9947016} is replaced by the wheel odometry information in both simulation and real-world experiments.
\end{rem}
\subsection{Accuracy Evaluations} \label{sec: Accuracy Evaluations}
The absolute trajectory error (ATE) results of each method listed in \mbox{Section \ref{Sec: Comparison baseline}} are shown in \mbox{Table \ref{tab: ATE}}. 
From the results, the proposed method provides high localization accuracy in all data sequences, with an average ATE of $0.0852m$.
The proposed method achieves an accuracy improvement of over $52\%$ and $23\%$ when compared with the state-of-the-art pure MF localization method \cite{WUICRA23} and the wheel odometry-aided MF localization method \cite{MFGN,MFPF,9947016}, respectively. 
The performance improvement mainly attributed to the outlier measurements is not neglected by the proposed method in \mbox{Section \ref{sec: Problem formulation}}.
Generally, MF localization methods \cite{WUICRA23,MFGN,MFPF,9947016} formulate the state estimation problem as a MAP problem and solve the state approximately through the Gaussian distribution assumption.
The proposed method approximates the lower bound of the optimal expected cost function (\textit{Theorem \ref{Them: 1}}) through Monte-Carlo sampling, which can better handle the non-Gaussian noise introduced by outlier measurements.
Especially for the Carpark environments, the presence of dynamic vehicles introduces a higher number of outlier observations compared to the corridor, office building, and simulation environments.
Hence, compared with the state-of-the-art MF-based localization methods, the proposed method achieves a $45\%$ to $77\%$ accuracy improvement in a carpark environment.
Comparison between the proposed method and ground truth over all data sequences are shown in \mbox{Fig. \ref{MF distribution}}, which shows the proposed method achieves consistent trajectory estimation with the ground truth.
\begin{table}[!t] \centering
    \centering
	\caption{The Comparison of Absolute Trajectory Error (ATE, Meters). The best result is highlighted in {\color{black}\bf{BOLD}}.}
        \vspace{-1em}
	\label{tab: ATE}
	\begin{threeparttable}
        \begin{tabular}{l c c c c c}
        \hline\hline
    	Method    					&{Proposed} 	&{Probability} 	&{Gradient} 	&{PF} 	&{DTW} 	\\\hline
            Warehouse                   &0.1204         &0.1566         &0.1001         &\bf0.0957 &1.2716\\
    	Corridor             &\bf0.0582         &0.1561         &0.0933         &0.1062 &0.1119\\
    	Building             &\bf0.0865         &0.1581         &0.0873         &0.0947 &0.1407\\
    	Carpark                     &\bf0.0756         &0.2541         &0.3404         &0.1492 &0.2092\\\hline
    
    	Average                     &\bf0.0852         &0.1812         &0.1553         &0.1114 &0.4334\\
    	\hline\hline
        \end{tabular}
	\end{threeparttable}
 \vspace{-2em}
\end{table}
\subsection{Robustness Evaluations}\label{sec: Robustness Evaluations}
The robustness of the MF-based localization methods is largely based on avoiding degeneracy and local minimum that arise from cases such as scarcity of MF gradient information and inevitable mismatchings.
As described in \textit{Remark \ref{rem: wheel odom aid}}, the wheel odometry information is adopted by both Gradient-based\cite{MFGN}, PF-based\cite{MFPF}, and DTW-based\cite{9947016} methods in case degeneracy occurs. For a fair comparison, a MF-based global localization method\cite{10370015,wu2024mglt} is introduced to re-initialize the MF-based localization methods from failed cases such as the gradient-based optimizer cannot converge, PF degeneracy, and all candidates of DTW or MAP estimator (used in the Probability-based method\cite{WUICRA23}) have similar cost.
Comparison results of the localizability over the proposed method with state-of-the-art methods are shown in \mbox{Fig. \ref{Robust compara}}.
From the result, both \cite{WUICRA23,MFGN,MFPF,9947016} are not able to solve the MF-based localization problem reliably.
Three factors contribute to this: 1) degeneracy caused by insufficient MF variation (especially for the gradient-based method). 2) inaccuracies in the correspondence search, affected by the bad initial guess. 3) measurement outliers introduced by external disturbance, such as vehicles, pedestrian-carried smartphones.
Thanks to the penalize of outliers in (\ref{Eq: TLS}) and the robust optimal solver ((\ref{Eq: q*}) in an information-theoretic sense) designed in Section \ref{sec: Robust Estimation via Stochastic Optimization}, the proposed method produces a reliable state estimation in both date sequences, as shown in \mbox{Fig. \ref{Robust compara}}.

\begin{figure*}[!t]\centering
\includegraphics[width=\linewidth]{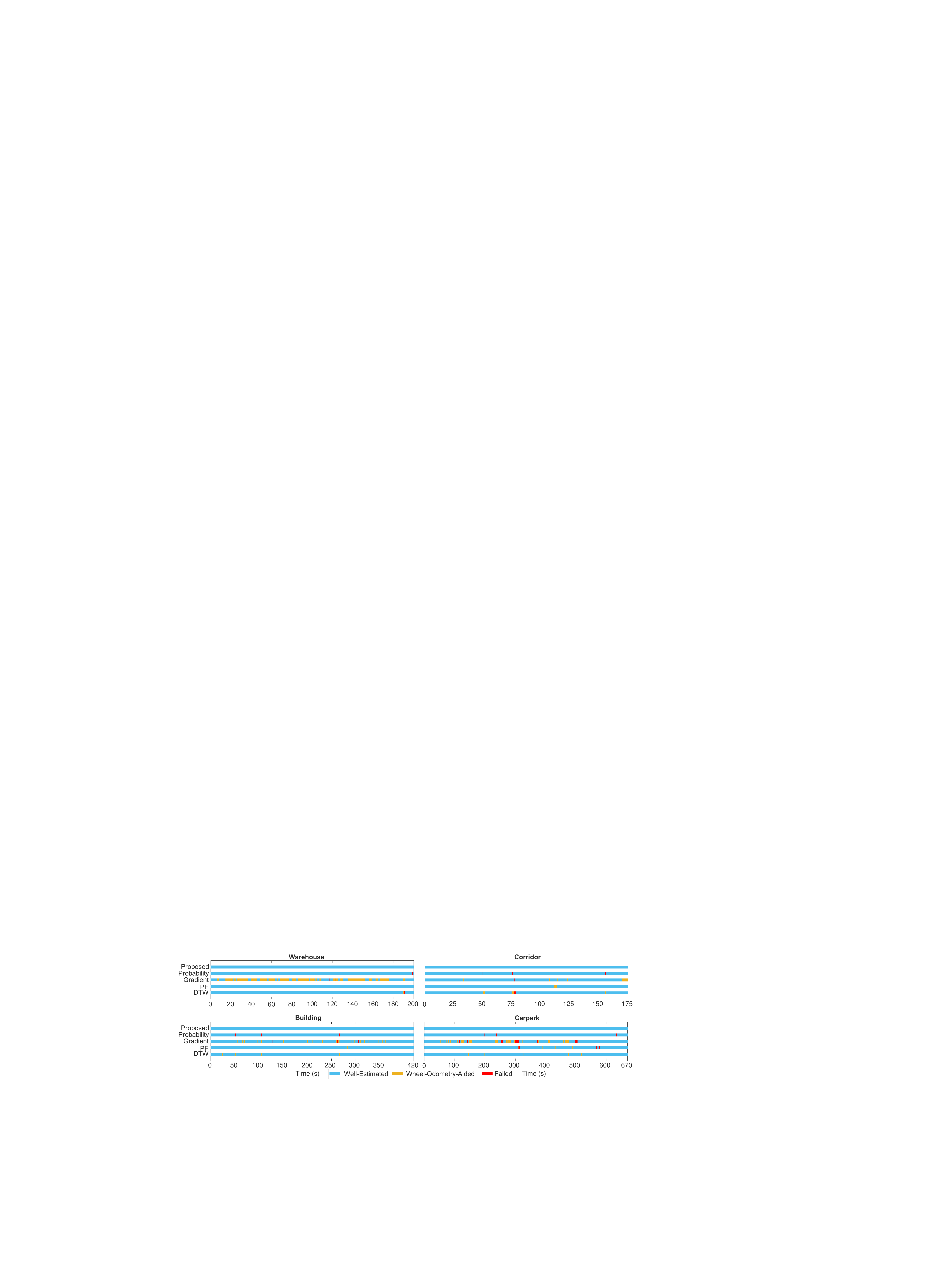}
\vspace{-1em}
\caption{Comparations of localizability for investigated MF-based localization methods.}\label{Robust compara}
\vspace{-1em}
\end{figure*}
\subsection{Running Time Evaluations}
The localization efficiency is summarized in \mbox{Table \ref{tab: realtime}}. 
From the results, the average state estimation time of the proposed method is well-bounded within $10 ms$, which allows MF localization to take place in real-time using the high-frequency magnetic sensor at a rate of $100 Hz$.
Although the gradient-based MF localization method achieves significant real-time performance, as described in Section \ref{sec: Accuracy Evaluations} and Section \ref{sec: Robustness Evaluations}, it cannot realize reliable MF-based localization.
Thanks to the fully paralleled stochastic optimization algorithm and the sampling dimension-reducing strategy introduced in Section \ref{sec: Robust Estimation via Stochastic Optimization} and Section \ref{sec: Practical Implementation for Magnetic Field-based State Estimation}, compared with the pure MF-based localization method\cite{WUICRA23}, the proposed method achieves a significant efficiency improvement, which saves $61\%$ to $78\%$ processing time.
Furthermore, the proposed method provides a comparable real-time performance with the PF-based method \cite{MFPF} while achieving a high-accuracy velocity estimation (as shown in \mbox{Fig. \ref{vel_est}}).
These results demonstrate the advantage of the proposed method over previous works that only considered pose estimation \cite{WUICRA23,MFGN,MFPF,9947016}.
As shown in \mbox{Fig. \ref{vel_est}}, the proposed method achieves smoother velocity estimation when compared with the ground truth provided by the state-of-the-art LiDAR-inertial odometry FAST-LIO2\cite{FASTLIO2}.

\begin{table}[!t] \centering
    \centering
	\caption{The Comparison of Average Processing Time (Milliseconds). The best result is highlighted in {\color{black}\bf{BOLD}}.}
	\label{tab: realtime}
	\begin{threeparttable}
        \begin{tabular}{l c c c c c}
        \hline\hline
    	Method    					&{Proposed} 	&{Probability} 	&{Gradient} 	&{PF} 	&{DTW} 	\\\hline
            Warehouse                   &1.2834         &5.8672         &\bf0.2410         &1.4257 &14.0315\\
    	Corridor             &4.5387         &17.3779         &\bf0.6764         &4.8696 &43.5775\\
    	Building             &9.2516         &32.3832         &\bf1.0178         &7.9291 &72.8186\\
    	Carpark                     &9.7129         &24.7835         &\bf0.8594         &7.1066 &54.3153\\\hline

    	\hline\hline
        \end{tabular}
	\end{threeparttable}
\end{table}
\begin{figure}[!t]\centering
\includegraphics[width=\linewidth]{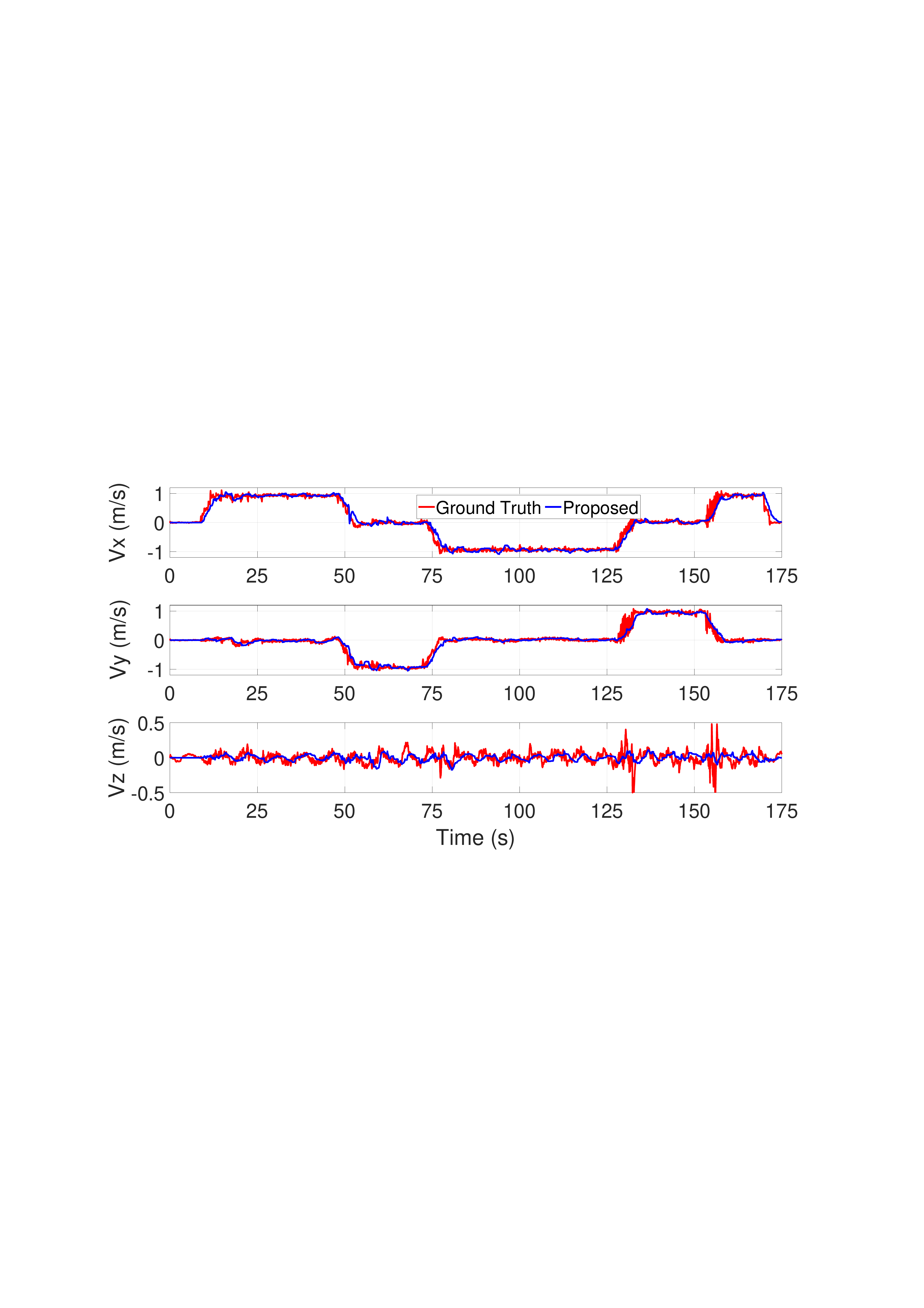}
\vspace{-2em}
\caption{Velocity estimated from the proposed method and ground truth under the Corridor sequence. It is worth noting that velocity can change drastically when loop closure occurs. The velocity ground truth is generated by FAST-LIO2\cite{FASTLIO2}.}\label{vel_est}
\vspace{-1em}
\end{figure}
\section{Conclusion}
In this article, an infrastructure-free and drift-free localization system, IDF-MFL, is developed for mobile robots using the ambient MF information.
The proposed method formulates the MF-based localization problem as a MF matching problem in a stochastic optimization formulation, which defines the MF matching cost in a piecewise function formulation with the consideration of outliers.
The optimal solution of the MF-based localization problem is derived by finding the optimal distribution of the robot state that makes the expectation of MF matching cost achieve its lower bound, and a detailed mathematical lower bound for the cost function expectation is carefully derived.
To deploy the proposed stochastic estimator in practice, a series of practical implementations are given in detail, including Monte-Carlo approximation, sampling dimension shrinking, and MF matching cost-shifting strategy.
With the successful implementation of the proposed method using C++ and ROS framework, extensive experiments are conducted to verify the practicability and performance of the proposed MF-based localization algorithm. 
The results show that IDF-MFL produces high-accuracy, reliable, and real-time localization results without any pre-installed infrastructures, and has a great potential to enable mobile robot applications.

\bibliographystyle{Bibliography/IEEEtran}
\bibliography{Bibliography/IEEEabrv,Bibliography/IEEEexample}
\end{document}